\definecolor{BlueGreen}{RGB}{0, 158, 115}
\definecolor{Vermillion}{RGB}{213, 94, 0}
\newtheorem{theorem}{Theorem}
\newtheorem{lemma}{Lemma}
\newtheorem{problem}{Problem}
\newtheorem{definition}{Definition}
\footnotesize \addbibresource{FairRedundantAssignment.bib}}
\newcommand{\A}{\mathcal{A}}
\newcommand{\F}{\mathcal{F}}
\newcommand{\Fo}{\mathcal{F}_\mathcal{O}}
\newcommand{\Ocurl}{\mathcal{O}}
\newcommand{\J}{J_j(\A)}
\newcommand{\Jhat}{\hat{J}_j(\A)}
\begin{document}
\title{Fair Robust Assignment using Redundancy}

\author{Matthew Malencia$^{1,2}$, Vijay Kumar$^{1}$, George Pappas$^{1}$, and Amanda Prorok$^{2}$%
\thanks{Manuscript received: October 15, 2020; Revised: January 11, 2021; Accepted: February 19, 2021.}
\thanks{This paper was recommended for publication by Editor M. Ani Hsieh upon evaluation of the Associate Editor and Reviewers' comments.} 
\thanks{This work was supported by ARL Grant DCIST CRA W911NF-17-2-0181, NSF Grant CNS-1521617,
ARO Grant W911NF-13-1-0350, ONR Grants N00014-20-1-2822 and
ONR grant N00014-20-S-B001, and
Qualcomm Research. The first author acknowledges support from the National Science Foundation Graduate Research Fellowship under Grant No. DGE-1845298.} 
\thanks{$^{1}$The first three authors are with GRASP Laboratory, University of Pennsylvania, Philadelphia, PA 19104-6228. \textit{emails:}
{\tt\footnotesize \{malencia,kumar,pappasg\}@seas.upenn.edu}}%
\thanks{$^{2} $The first and fourth authors are respectively co-affiliated and affiliated with University of Cambridge, William Gates Building, 15 JJ Thomson Avenue, Cambridge CB3 0FD, UK. \textit{emails:}
{\tt\footnotesize \{mpm69,asp45\}@cam.ac.uk}}
\thanks{Digital Object Identifier (DOI): see top of this page.}
}

\markboth{IEEE ROBOTICS AND AUTOMATION LETTERS. PREPRINT VERSION. ACCEPTED MARCH, 2021}
{Malencia \MakeLowercase{\textit{et al.}}: Fair Robust Assignment using Redundancy} 

\maketitle

\begin{abstract}
We study the consideration of fairness in redundant assignment for multi-agent task allocation. It has recently been shown that redundant assignment of agents to tasks provides robustness to uncertainty in task performance. However, the question of how to \textit{fairly} assign these redundant resources across tasks remains unaddressed. In this paper, we present a novel problem formulation for fair redundant task allocation, which we cast as the optimization of worst-case task costs under a cardinality constraint. Solving this problem optimally is NP-hard. We exploit properties of supermodularity to propose a polynomial-time, near-optimal solution. In supermodular redundant assignment, the use of additional agents always improves task costs. Therefore, we provide a solution set that is $\alpha$ times larger than the cardinality constraint. This constraint relaxation enables our approach to achieve a super-optimal cost by using a sub-optimal assignment size. We derive the sub-optimality bound on this cardinality relaxation, $\alpha$. Additionally, we demonstrate that our algorithm performs near-optimally without the cardinality relaxation. We show simulations of redundant assignments of robots to goal nodes on transport networks with uncertain travel times. Empirically, our algorithm outperforms benchmarks, scales to large problems, and provides improvements in both fairness and average utility.
\end{abstract}

\begin{IEEEkeywords}
Multi-Robot Systems, Task Planning, Fairness, Submodular Optimization, Ethics and Philosophy.
\end{IEEEkeywords}

\section{Introduction}

\IEEEPARstart{F}{airness} in algorithms has received increasing attention in both research \cite{barocas_fairness_2017, bellamy_ai_2019, verma_fairness_2018} and policy \cite{edwards_slave_2017, shahriari_ieee_2017, wachter_why_2017}. Multi-agent task allocation algorithms distribute resources among \textit{human-centric} tasks, thus requiring the consideration of fairness. Bias in these systems has myriad sources, such as design choices \cite{mittelstadt_ethics_2016}, models and data \cite{mehrabi_survey_2019}, and interpretation of results \cite{cummings_subjectivity_2020}. In this paper, we focus on the fairness of the objective function. We reevaluate the objective function of redundant multi-agent task allocation under the consideration of fairness and propose an algorithm for solving for this fair objective.

Multi-agent task allocation is studied across domains ranging from operations research to robotics \cite{gale_theory_1989, nam_analyzing_2017, yang_algorithm_2017}. Because finding optimal allocations in this combinatorial problem is challenging, most approaches assume deterministic costs. However, real-world costs are uncertain. For example, in assigning agents to spatially located goals, travel times and agent locations are not deterministic \cite{prorok_privacy-preserving_2017}. To account for this uncertainty, previous approaches define risk-based objective functions \cite{zhou_approximation_2018} or use risk constraints \cite{yang_algorithm_2017}. In recent work, Prorok shows that redundant assignment provides robustness against uncertainty and is complementary to existing approaches \cite{prorok_robust_2020}. For example, in \textit{time-sensitive} applications such as rescue scenarios and robots delivering life saving supplies \cite{ackerman_medical_2018}, redundancy improves performance because tasks are completed by the first robot to arrive based on the \textit{first-come first-to-serve} principle \cite{prorok_robust_2020}. 

\begin{figure}
    \centering
    \includegraphics[scale=0.35]{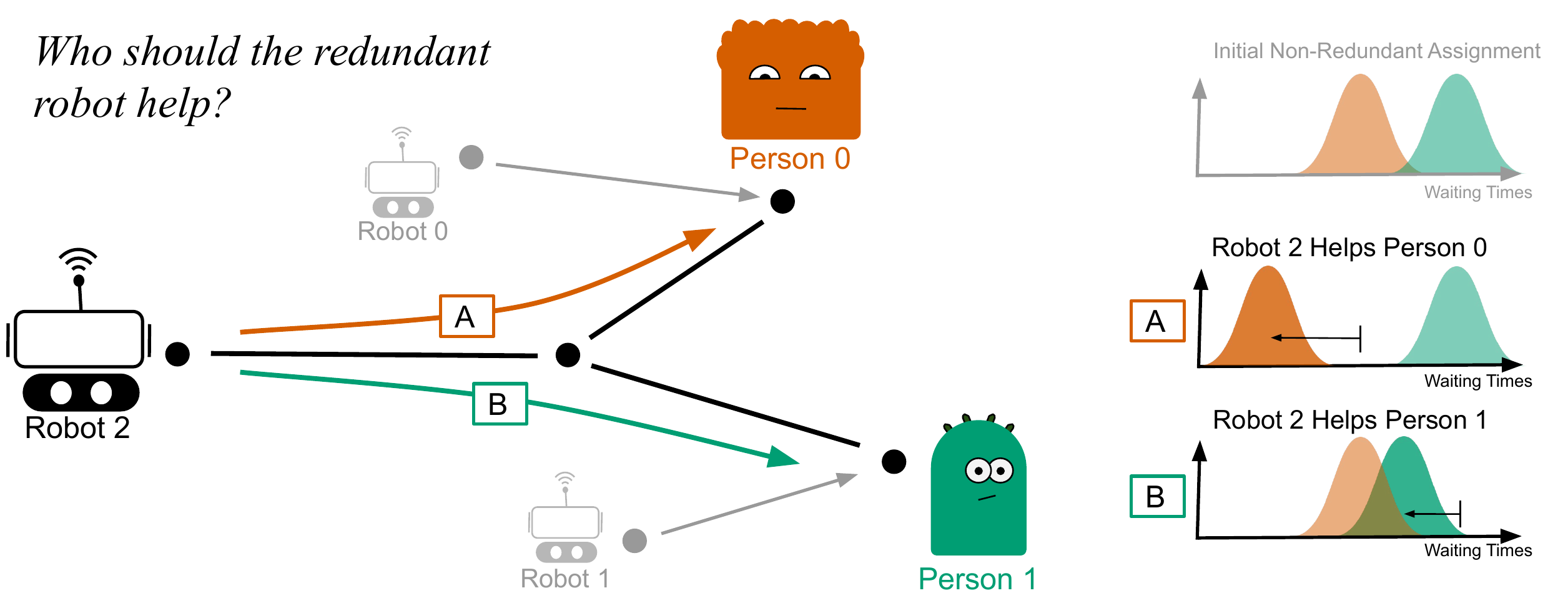}
    \caption{Redundant assignment answers the question, \textit{which task should receive extra resources?} Given an existing non-redundant assignment and respective cost distributions (\textit{top right}), a single redundant robot, Robot 2, can be assigned to either task. Utilitarian approaches assign Robot 2 to Person 0 because of the higher improvement in cost (\textit{plot A}), whereas fair approaches assign Robot 2 to Person 1 because of their higher need (\textit{plot B}). 
    }
    \label{fig:unfair_utilitarianism}
\end{figure}

The redundant assignment problem is classified as Single-Task Robots, Multi-Robot Tasks with Instantaneous Assignment (ST-MR-IA) because each robot can complete all tasks (ST) and multiple robots can be assigned to the same task (MR) \cite{gerkey_formal_2004}. This problem can be cast as a set-partitioning problem which has been proven to be strongly NP-hard \cite{garey_strong_1978}. ST-MR-IA problems require a global optimization function. Traditional methods that optimize utilitarian welfare \cite{harsanyi_bayesian_1978} (the expected sum of costs) provide no guarantees for individual fairness \cite{rawls_theory_2009}. Figure \ref{fig:unfair_utilitarianism} shows that utilitarian approaches seek the best absolute improvement in task cost without accounting for individual task needs.  In \textit{human-centric} applications such as rescue scenarios, delivery systems, and ride-sharing, disparities in task performance yield disparities in the treatment of individual people. This unfairness in utilitarian assignments makes the system unfit with respect to broader social structures and causes a lack of trust in the system. For this reason, it is crucial to consider \emph{fair} optimization objectives. 

Fairness in redundant assignment is an especially important consideration because the tasks served in these allocation problems are often both human-centric and high-risk. For example, in rescue scenarios such as robots delivering life saving supplies, the choice of how robots are allocated impacts how long different individuals must wait for the life saving supplies. This allocation strategy stems directly from the objective function chosen by the system designer.

There exist many definitions of fair objectives \cite{brams_fair_1996, moulin_fair_2004, zhang_fairness_2015}. For example, proportional division seeks assignments that provide agents with equal proportional utility but has no guarantees for indivisible resources \cite{suksompong_asymptotic_2016}, and envy-freeness seeks assignments such that no agent prefers another agent's assignment over its own \cite{arnsperger_envy-freeness_1994}. While fair objectives are often \textit{intra}-personal comparisons, where agents evaluate others' assignments according to their own utility function, experimental work shows that human behavior is better predicted by \textit{inter}-personal comparisons, where agents compare outcomes \cite{herreiner_envy_2009}. For redundant assignment, we seek to minimize the maximum cost among the set of tasks. Specifically, we answer the question: \textit{how can we find a redundant assignment that is fair to the worst-off task?} Hence, we define an inter-task criteria based on John Rawls' \textit{veil of ignorance} \cite{rawls_theory_2009}. 

The `veil of ignorance' thought experiment for the example of emergency supply delivery is as follows. Imagine you get to choose the assignment of delivery robots \textit{and} you will be on the receiving end of this assignment, i.e., one of the humans in need of life-saving supplies. However, you are \textit{ignorant} of your location. Rawls concludes that, under this \textit{veil of ignorance}, the rational conclusion is to choose an assignment that benefits the person with the longest waiting time, since this person could be you. This definition of fairness is often formulated as a minimax or maximin problem \cite{zhang_fairness_2015}. We cast this problem as the optimization of worst-case expected task cost.

\subsection{Related Work}
Noise and uncertainty in real-world applications require managing risk (known unknowns) \cite{borkar_risk-constrained_2014, chaves_risk_2015, majumdar_how_2020}. A common risk measure is Conditional Value at Risk (CVaR), which is the expectation of the tail of a distribution. Wilder \cite{wilder_risk-sensitive_2018} and Zhou et. al \cite{zhou_approximation_2018} use CVaR in approaches to submodular optimization. In addition to risk measures, redundancy provides robustness to uncertainty. Prorok \cite{prorok_robust_2020} shows that redundant assignment improves task performance under uncertainty and is complementary to traditional risk measures. We leverage redundancy in this work to maintain the guarantees for satisfying all constraints of the assignment problem while not precluding the incorporation of traditional risk measures.

Most assignment problems, as well as many problems in robotics, optimize the average expected utility. This default objective is mathematically convenient but has been questioned under considerations such as fairness. Zhang and Shah \cite{zhang_fairness_2015} define four maximin fairness criteria for multi-agent Markov Decision Processes. These definitions of fairness are based on the Rawlsian theory of justice \cite{rawls_theory_2009}. The fairness criteria in this paper similarly applies Rawlsian justice to redundant assignment. Mathematically, this definition is similar to providing robustness to a worst-case objective. Therefore, we look to insights from prior work solving minimax problems.

Minimax problems are difficult to solve because they are highly nonlinear and combinatorial in nature. Uncertainty adds additional complexity to this problem. Because it is NP-hard to solve these problems optimally, approaches exploit mathematical properties of problem formulations to produce polynomial-time, near-optimal solutions. One such property is supermodularity, the property of diminishing returns (see Appendix \ref{supermod}). Supermodular objectives (or in many cases, submodular) can be solved using simple greedy algorithms with a $\frac{1}{2}$ bound on optimality \cite{nemhauser_analysis_1978}. Many objective functions are naturally supermodular, such as information gain \cite{srinivas_information-theoretic_2012} and real-world monetary costs \cite{pigou_laws_1927}. Submodularity has also been exploited to create robustness to worst-case objectives \cite{krause_robust_2008} and arises naturally in redundant task assignments as it captures the attribute of diminishing returns \cite{prorok_robust_2020}.

In a closely related work, Krause \cite{krause_robust_2008} defines the Robust Submodular Observation Selection problem, where the goal is to select an observation set that is robust against the worst-case of multiple objective functions. This work is grounded in the example of sensor placement for monitoring an environmental process. The problem involves finding a set of sensor locations, $\A$, that is a subset of all possible sensor locations, $V$. There exists a set of $\J$ that measure the variance reduction at locations $j$. The objective is to choose $k$ sensor locations that maximize the minimum variance reduction (equivalent to minimizing the maximum variance). This problem is formulated as $ \max _{\A \subseteq V} \quad\min _{j} \J \quad \text{s.t.} \quad |\A| \leq k $ , where each $J_{1}, \ldots, J_{M}$ are normalized monotonic submodular functions.

To solve this problem, Krause \cite{krause_robust_2008} proposes the \textsc{Saturate} algorithm, which finds an approximate solution by relaxing the cardinality constraint, with guarantees on this relaxation. Powers et. al \cite{powers_constrained_2016} extend this work for any matroid constraint. They propose the \textsc{Generalized Saturation} algorithm to solve this problem by relaxing the objective. They guarantee that only a fraction of the set of objective functions exceed a threshold. In contrast to these works, we solve a problem under \textit{both} assignment constraints and a cardinality constraint whereby our algorithm relaxes the constraint on cardinality.

\subsection{Contributions} We propose a novel problem formulation that (for the first time) formalizes the \textit{fair} redundant task allocation problem. Redundancy serves to improve performance under risk. An assumption of supermodularity enables performance guarantees. Notably, we are the first to develop an algorithm for solving a fair redundant assignment problem. Our main contributions are as follows:
\begin{itemize}
    \item We define the Fair Supermodular Redundant Assignment Problem as the minimization of the maximum expected cost across tasks under assignment constraints and cardinality constraints.
    \item We propose an efficient algorithm to solve the Fair Supermodular Redundant Assignment problem approximately by relaxing the cardinality constraint.
    \item We prove a bound on this relaxation and the computational complexity of our approach.
\end{itemize} 
We analyze our algorithm in simulation, showcasing our theory empirically, demonstrating that our algorithm outperforms benchmarks, and analyzing a case study to compare the fairness of our algorithm to prior work.

\section{Preliminaries}
The problem space in this paper is represented by a graph $\mathcal{B}=(\mathcal{U}, \mathcal{F}, \mathcal{C})$. The set of vertices $\mathcal{U}$ is partitioned into two disjoint sets $\mathcal{U}_a$ and $\mathcal{U}_t$, the sets of agent nodes and task nodes, respectively. The edge set $\mathcal{F}=\left\{(i, j) \; | \; i \in \mathcal{U}_{a}, j \in \mathcal{U}_{t}, \;\forall \; i, j\right\}$ defines all possible agent-task pairings available for assignment. For each $(i,j)$ pair in $\mathcal{F}$, there exists a corresponding cost $\mathcal{C}_{i,j}$, which is a random variable. The set of all $\mathcal{C}_{i,j}$ is the cost set, $\mathcal{C}$. While specific instances of this problem can constrain $\mathcal{C}$, we make no assumptions on the form, independence, or correlations of the elements of $\mathcal{C}$.

There are $M$ tasks and $N$ agents, and a total deployment size, $N_d$, specified as an input. Each task $j$ has a corresponding cost function $J_j(\cdot)$ mapping the cost set $\mathcal{C}$ to a real number.\footnote{We later assume that all $J_j(\cdot)$ are supermodular.} For example, $J_j(\cdot)$ could be the sum, minimum, or product of the costs $\mathcal{C}_{i,j}$ for agents $i$ assigned to task $j$. These cost functions represent how a set of redundant agents complete a task; the functions map a subset of the assignment to a real valued cost. In this paper we consider the minimum operator, representing the first-come first-to-serve principle \cite{prorok_robust_2020}.

Fair assignment seeks a pairing of agents to tasks that minimizes the maximum task cost, where agents cannot be assigned multiple tasks, tasks must be assigned at least one agent, and the number of agents is no more than $N_d$. This problem is formulated as finding a subset of the ground set that minimizes the maximum cost: $\min_{\A \subseteq \F} \;  \max_j \J$. This assignment, $\A$, has a constrained cardinality of $|\A| \leq N_d$, and assignment constraints per agent and task. The optimization is formulated with the introduction a second decision variable $\xi$ and constraint of all $\J$ by $\xi$. Finding the minimum $\xi$ such that all $\J \leq \xi$ equivalently minimizes $\max_j \J$. 

\begin{definition}\label{p_space}
\emph{Fair Assignment:}
Given N agents and M tasks with uncertain agent-task assignment costs, find an assignment $\A \subseteq \mathcal{F}$ no larger than $N_d \geq M$ and minimum cost bound $\xi \in \mathbb{R}$, such that no costs $\J$ exceed $\xi$. Formally: 
\begin{equation}\label{full}
\begin{aligned}
\min_{\xi, \A \subseteq \F} \quad &  \xi  \\
\textrm{s.t.} \quad & \J \leq \xi , \quad \forall j \\
 &  |\A| \leq N_d \\
  & \forall \; i, \; |\{j \;|\;(i, j) \in \A \}| \; \leq 1\\
  & \forall \; j, \; |\{i \;|\;(i, j) \in \A \}| \; \geq 1\\
\end{aligned}
\end{equation}
\end{definition}

Fair assignment is a set-partitioning problem and has been shown to be strongly NP-hard \cite{garey_strong_1978}. Approaches that solve similar problems do not have guarantees on all of these constraints \cite{zhou_approximation_2018, zhou_sensor_2019}. To ensure that both assignment constraints are met and to reduce uncertainty through redundancy, we define the Fair Supermodular Redundant Assignment problem.  

\section{Problem Statement}

Consider a system of $M$ tasks and $N$ agents, where each agent-task pairing has random cost, represented by an arbitrary probability distribution.\footnote{Specific instances of this problem can restrict these distributions, but we make no assumptions on their form, independence, or correlations.} For each of the $M$ tasks, a cost function is defined. Our goal is to find an assignment that:
\begin{itemize}
    \item minimizes the maximum task cost,
    \item uses at most $N_d$ agents, 
    \item assigns each agent to at most one task, and
    \item ensures no task is unassigned ($N_d \geq M$).
\end{itemize}

Redundant assignment, where multiple agents can be assigned the same task, provides robustness against uncertainty and improvements to expected performance of tasks \cite{prorok_robust_2020}. If tasks have deterministic costs, this problem simplifies to the linear bottleneck assignment problem which can be solved using the thresholding algorithm \cite{szpankowski_bottleneck_1988}. The presence of uncertainty, however, requires a new approach to this optimization.

This section describes a flexible mathematical formulation of fair assignment under two assumptions: 

\textit{Assumptions}. \textbf{(1)} all cost functions, $J_j(\cdot)$, must be \textit{supermodular},\footnote{$J_j(\cdot)$ can each be different functions but should be similarly scaled so that comparisons among different cost functions are meaningful.} and \textbf{(2)} a prior non-redundant assignment exists.

The first assumption is that all cost functions are supermodular, i.e., they have the property of diminishing return. However, we are solving for the maximum of a set of supermodular functions, which is generally \textit{not supermodular} (see Appendix \ref{supermod}). Therefore, the greedy algorithm has no performance guarantees for this problem. Simple examples, as shown by Krause, prove that the greedy algorithm applied to this problem can perform arbitrarily badly \cite{krause_robust_2008}.

The second assumption requires an initial non-redundant assignment exists, denoted $\Ocurl$, such as one found through standard assignment methods (e.g., threshold algorithm for the linear bottleneck assignment problem). This assumption guarantees that all tasks are assigned at least one agent, thus eliminating this constraint. Additionally, this assumption enables tractability and ensures the supermodularity of cost functions (e.g., unassigned tasks potentially have infinite cost, breaking supermodularity). With this assumption, we focus strictly on assigning the redundant agents. Therefore, the ground set becomes $\F \backslash \Ocurl$, denoted $\F_{\Ocurl}$. Cost functions remain denoted $\J$ though they are a function of $\A$ given $\Ocurl$.

These two assumptions on Definition~\ref{p_space} yield Problem~\ref{FSRA}, the Fair Supermodular Redundant Assignment problem. The constraint in Definition \ref{p_space} that all tasks be completed is guaranteed by the initial assignment (Assumption 2), and therefore omitted. All cost functions, $\J$, are assumed to be supermodular and are a function of $\A$ given $\Ocurl$.

\begin{problem}\label{FSRA}
\emph{Fair Supermodular Redundant Assignment:} Given N agents, M tasks with uncertain agent-task assignment costs, and an initial assignment $\Ocurl$, find an assignment $\A \subseteq \Fo$ and minimum cost bound $\xi \in \mathbb{R}$, such that no cost $\J$ exceeds $\xi$, assuming all functions $\J$ are supermodular. Formally:

\begin{equation}\label{redundant}
\begin{aligned}
\min_{\xi, \A \subseteq \Fo} \quad &  \xi  \\
\textrm{s.t.} \quad & \J \leq \xi , \quad \forall j \\
  & |\A| \leq N_d - M \\
  & \forall \; i, \; |\{j \;|\;(i, j) \in \A \cup \Ocurl \}| \; \leq 1
\end{aligned}
\end{equation}
\end{problem}

This formulation is general and flexible enough to be applied to many assignment problems, as long as the cost functions are supermodular. One such example is the assignment of agents with uncertain travel times to spatially located tasks. In this example, the cost at each node is defined as the expected waiting time, $\J = \underset{c}{\mathbb{E}} \left[ \min_i \left\{ C_{i j } |(i, j) \in \A \cup \Ocurl  \right\} \right]$. The minimum operator here represents the first-come first-to-serve principle introduced by Prorok \cite{prorok_robust_2020}, who showed that this function is supermodular. 

\section{Approach} \label{approach}

The algorithm presented in this section provides an approximate solution, $\A_f$, to the Fair Supermodular Redundant Assignment problem. The solution is approximate because we relax the cardinality constraint for tractability purposes: the solution $\A_f$ can be $\alpha$ times larger than the size of the optimal assignment (this section elaborates on the bound on $\alpha$). By relaxing the size of the solution, this assignment is guaranteed to have a maximum cost that is at most equal to the maximum cost of the optimal solution $\A^*$. That is, \textbf{the approximate solution $\A_f$ has a cost that is less than that of $\A^*$ but a size that is larger}. This section outlines the algorithm, a bound on the cardinality relaxation $\alpha$, the limits on the deployment size $N_d$, and the computational complexity of this approach.

\subsection{The Relaxed Fair Supermodular Redundant Assignment Problem}

The basic premise of our approach to solving Problem \ref{FSRA} is to conduct half-interval search on possible values of $\xi$ and to solve a sub-problem to determine which half-interval cannot contain the target solution. Suppose there exists an algorithm that finds the smallest assignment that yields a maximum task cost of at most $\xi$, which is given as an input:

\begin{equation} \label{sub}
\begin{aligned}
\underset{\A \subseteq \Fo}{\text{argmin}} \quad &  |\A|  \\
\textrm{s.t.} \quad & \J \leq \xi , \quad \forall j \\
  & \forall \; i, \; |\{j \;|\;(i, j) \in \A \cup \Ocurl \}| \; \leq 1
\end{aligned}
\end{equation}

This sub-problem returns a candidate solution $\A_s$. If $\A_s$ has a cardinality that is at most $N_d - M$, then $\A_s$ and the given value of $\xi$ are feasible for Problem \ref{FSRA}. If $|\A_s| > N_d - M$, then $\A_s$ and the $\xi$ value are not feasible. Solving this sub-problem determines whether a given value of $\xi$ is feasible. Therefore, half-interval search can find the optimal (minimum feasible) value of $\xi$. Given a starting range of possible $\xi$ values, the midpoint is tested for feasibility. If feasible, the half-interval below the midpoint contains the optimal value and thus becomes the new range of possible $\xi$ values. Otherwise, the higher half-interval becomes the new range. This procedure repeats until convergence, at which point the returned $\A_s$ and $\xi$ are guaranteed to be optimal.

However, Krause proves with Theorem 3 \cite{krause_robust_2008} that unless $P=NP$, there cannot exist any polynomial time approximation algorithm for Problem \ref{FSRA}. In other words, unless $P=NP$, the sub-problem in equation \eqref{sub} cannot be solved in polynomial time. To enable a polynomial time approximation algorithm for Problem \ref{FSRA}, the cardinality constraint is relaxed by $\alpha$. This results in the \textit{relaxed} Fair Supermodular Redundant Assignment problem, with $\alpha > 1$:

\begin{equation}\label{relaxed}
\begin{aligned}
\min_{\xi, \A \subseteq \Fo} \quad &  \xi  \\
\textrm{s.t.} \quad & \J \leq \xi , \quad \forall j \\
  & |\A| \leq \alpha (N_d - M) \\
  & \forall \; i, \; |\{j \;|\;(i, j) \in \A \cup \Ocurl \}| \; \leq 1
\end{aligned}
\end{equation}

This relaxed problem can now be solved with the aforementioned half-interval procedure. Given a value $\xi$, the sub-problem of the relaxed Fair Supermodular Redundant Assignment problem is still equation \eqref{sub}. However, the relaxation on the cardinality constraint changes the update step of the half-interval search. We begin with a range of values $[\xi_{min},\xi_{max}]$, where $\xi_{min}=0$ because costs are strictly positive and $\xi_{max}$ is the worst waiting time after the initial assignment, $\max_j J_j(\emptyset)$. Given the mid-point of this range, the sub-problem returns a candidate solution $\A_s$. Previously, the cardinality of $\A_s$ was compared to $N_d - M$ to determine its feasibility. Because of the cardinality relaxation, the cardinality of $\A_s$ now informs either the \textit{infeasibility} of the original problem or the \textit{feasibility} of the relaxed problem, adjusting the respective bounds:

\begin{itemize}
    \item If $|\A_s| > \alpha (N_d-M)$, then $\xi$ is not a feasible solution to the original problem.\footnote{Or if there does not exist a feasible $\A_s$.} Set $\xi_{min}=\xi$.
    \item If $|\A_s| \leq \alpha (N_d-M)$, then $\xi$ is a feasible solution to the relaxed problem. Set $\xi_{max}=\xi$.
\end{itemize}

To guarantee a solution, the number of agents used after the relaxation, $\alpha(N_d-M)$, must not exceed the number of agents available, $N-M$. Thus, set $N_d \leq \frac{N-M}{\alpha} + M$.

Through this point, we have assumed that there exists an approach to solving the sub-problem defined in equation \eqref{sub} above. We have also assumed that this solution satisfies the relaxed cardinality constraint $\alpha$. The rest of this section describes both an algorithm that solves this sub-problem and the bound on the cardinality relaxation $\alpha$.

\subsection{Algorithm Details}

We seek an algorithm to sub-problem \eqref{sub} that finds the smallest set $\A$ such that all cost functions $\J$ are below a given cost budget $\xi$. We build on the prior work of Prorok on \textit{optimal matching of redundant agents under a cost budget} \cite{prorok_robust_2020}. Leveraging this work requires two challenges to be addressed. First, Prorok uses a single supermodular function constraint while we constrain a set of $M$ supermodular functions. Second, the relaxation of this sub-problem must hold for all iterations of the half-interval search.  

We transform equation~\ref{sub} to have a single supermodular cost constraint. Let $\Jhat$ be the truncated function $\max \{\J,\xi \}$ and form a single constraint of the average of the set of $\Jhat$. This constraint is equivalent to the original set of cost constraints. The average of truncated cost functions less than $\xi$ implies that all cost functions are less than $\xi$. Additionally, truncation and the average function preserves supermodularity. Because the average of the truncated cost functions has a minimal value of $\xi$, the inequality constraint becomes an equality constraint.

\begin{equation} \label{sub_trunc}
\begin{aligned}
\underset{\A \subseteq \Fo}{\text{argmin}} \quad &  |\A|  \\
\textrm{s.t.} \quad & \frac{1}{M}\sum_j \Jhat = \xi \\
  & \forall \; i, \; |\{j \;|\;(i, j) \in \A \cup \Ocurl \}| \; \leq 1
\end{aligned}
\end{equation}
 
Prorok's greedy algorithm \cite{prorok_robust_2020} is shown to solve this problem in polynomial time with a bound (which we call $\alpha$) on the ratio of the solution set size to the optimal solution set size. However, to use this approximation algorithm in the half-interval search, the bound on $\alpha$ must be independent of $\xi$. Theorem \ref{T} below details a $\xi$-independent bound on $\alpha$.

We solve Problem \ref{FSRA} with Algorithm \ref{a1}, Fair Supermodular Redundant Assignment. After initialization, Algorithm \ref{a1} conducts a half-interval search over $\xi$. For each value of $\xi$, Prorok's Greedy Redundant Assignment (GRA) with dynamic programming \cite{prorok_robust_2020} greedily assigns agent-task pairs until either all costs are below $\xi$ or all agents are assigned. The output of GRA is then used to update the range of $\xi$ to continue the half-interval search until convergence. 

\textit{Algorithm description.} The inputs required are $M$ cost functions $J_j$ that are all supermodular, a deployment size $N_d \geq M$, a relaxation constant $\alpha \geq 1$, and an initial non-redundant assignment $\Ocurl$ that can be found using traditional methods such as the thresholding algorithm for the linear bottleneck assignment problem. Lines 1-3 initialize the range of $\xi$ and the output solution $\A_f$. The \textit{while} loop beginning at line 5 is the half-interval search, whose interval bounds are updated in lines 8-13. Given a value $\xi$ (line 6) and the average truncated cost function defined in line 1, the sub-problem in equation~\ref{sub_trunc} is solved (line 7). The solution of this sub-problem, $\A_s$, is then used to update the half-interval search range. If $|\A_s| \leq \alpha(N_d-M)$, $\xi$ is a feasible approximate solution to Problem \ref{FSRA}. $\xi_{max}$ is updated and $\A_s$ is kept as the best candidate solution, $\A_f$. If $|\A_s| > \alpha(N_d-M)$, there is no feasible solution to Problem \ref{FSRA} for the given value of $\xi$, and $\xi_{min}$ is updated. For every iteration of the half-interval search, $\xi_{min}$ is guaranteed to be infeasible, i.e., lower than minimum $\xi$ in Problem \ref{FSRA}. Therefore, the half-interval search converges to $\xi \leq \xi^*$ and the returned solution $\A_f$ is guaranteed to satisfy $|\A_f| \leq \alpha(N_d-M)$.

Theorem~\ref{T} states that the Fair Supermodular Redundant Assignment algorithm returns a solution $\A_f$ with a cost that is at worst equivalent to the cost achieved by $\A^*$ but a solution size that is up to $\alpha$ time larger, as defined in equation~\eqref{alpha}.
The algorithm works for any value of $\alpha \geq 1$, but values differing from that defined in equation~\eqref{alpha} will not have the guarantees stated in Theorem~\ref{T}.

\begin{theorem}\label{T}
The Fair Supermodular Redundant Assignment algorithm guarantees that
$$ \max_j J_j(\A_f) \leq \min_{|\A| \leq N_d} \max_j J_j(\A) \quad \text{and} \quad
|\A_f| \leq \alpha N_d $$
for a given $N_d$ and $\alpha = 1 + \log (\max_j J_j(\emptyset) - 1)$. The number of function evaluations is
$$ O \Big( (1+ \log_2(M \max_j J_j(\emptyset)) \; (N_d-M)NMS \Big)$$
where $S$ is the number of samples taken from the cost distributions.
\end{theorem}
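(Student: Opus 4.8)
The plan is to split Theorem~\ref{T} into its three assertions --- the cost guarantee $\max_j J_j(\A_f)\le\min_{|\A|\le N_d}\max_j J_j(\A)$, the cardinality bound $|\A_f|\le\alpha N_d$, and the function-evaluation count --- and to reduce the first two to a single structural fact: that Prorok's GRA solves the inner sub-problem \eqref{sub_trunc} with a multiplicative size bound that is \emph{independent of} $\xi$. First I would recast \eqref{sub_trunc} as a submodular set-cover instance. Since each $\Jhat=\max\{\J,\xi\}$ is supermodular and non-increasing (as the text asserts) with minimum value $\xi$, the coverage function $f(\A)=g(\emptyset)-g(\A)$, where $g(\A)=\tfrac1M\sum_j\Jhat$, is monotone, submodular, and fully saturates exactly when every $\J\le\xi$. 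Minimizing $|\A|$ subject to saturation of $f$ under the per-agent matching constraint is precisely the (matroid-constrained) submodular cover GRA addresses, so a Wolsey-type greedy analysis applies and yields $|\A_s|\le\alpha\,|\A^{\mathrm{exact}}|$, where $\A^{\mathrm{exact}}$ is the exact minimizer of \eqref{sub}.

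The crux --- and the step I expect to be the main obstacle --- is making $\alpha$ independent of $\xi$. Wolsey's ratio has the form $1+\log\!\big(\text{(total coverage)}/\text{(final increment)}\big)$, and both the truncation level and the coverage range depend on $\xi$ through $\Jhat$. The key observation that removes this dependence is that the objective is the \emph{average} of the truncated costs: the total coverage equals $g(\emptyset)-\xi=\tfrac1M\sum_j J_j(\emptyset)-\xi\le\tfrac1M\sum_j J_j(\emptyset)\le\max_j J_j(\emptyset)$, uniformly in $\xi\ge0$. Pairing this with the fact that, under the integer-cost scaling used for the search, the last greedy step lowers $\sum_j\Jhat$ by at least one unit, the ratio collapses to $\alpha=1+\log(\max_j J_j(\emptyset)-1)$, which contains no $\xi$. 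I would verify that $\max\{\cdot,\xi\}$ and averaging preserve monotonicity and supermodularity (already claimed in the text) so that this guarantee is licensed at every tested $\xi$.

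With a $\xi$-independent $\alpha$ in hand, the cost guarantee follows from the half-interval invariant. I would show that whenever GRA returns $|\A_s|>\alpha(N_d-M)$, the approximation bound forces $|\A^{\mathrm{exact}}|>N_d-M$, so no assignment of the allowed size saturates at that $\xi$ and the value is infeasible for Problem~\ref{FSRA}; since feasibility is monotone in $\xi$, every recorded $\xi_{\min}$ thus lies strictly below the optimum $\xi^{*}=\min_{|\A|\le N_d}\max_j J_j(\A)$. Conversely $\A^{*}$ witnesses $|\A^{\mathrm{exact}}(\xi^{*})|\le N_d-M$, whence $|\A_s(\xi^{*})|\le\alpha(N_d-M)$, so $\xi^{*}$ passes the relaxed-feasibility test; by monotonicity of that test in $\xi$ the search terminates with $\xi_{\max}\le\xi^{*}$. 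Because $\A_f$ is the saturating set at $\xi_{\max}$, every $J_j(\A_f)\le\xi_{\max}\le\xi^{*}$, giving the cost claim; and since $\A_f$ passed the relaxed test, counting the $M$ initial agents gives $|\A_f|\le\alpha(N_d-M)+M=\alpha N_d-(\alpha-1)M\le\alpha N_d$ using $\alpha\ge1$.

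Finally, for the complexity I would multiply the two loop counts. The half-interval search ranges over the integer values of $\sum_j\Jhat\in[M\xi,\,M\max_j J_j(\emptyset)]$, giving $1+\log_2(M\max_j J_j(\emptyset))$ iterations; each call to GRA greedily adds up to $N_d-M$ agents, each step scanning the $O(NM)$ candidate agent-task pairs, with each marginal evaluation drawing $S$ samples from the cost distributions (dynamic programming prevents recomputation across steps). The product is $O\big((1+\log_2(M\max_j J_j(\emptyset)))(N_d-M)NMS\big)$, as stated.
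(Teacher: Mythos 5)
Your proposal is correct and follows essentially the same route as the paper's proof: the paper likewise obtains the greedy size ratio from Prorok's (Wolsey-type) Proposition 3 applied to the truncated average $\Bar{J}(\A)=\frac{1}{M}\sum_j \max\{\J,\xi\}$, removes the $\xi$-dependence via the integer-cost assumption, derives the cost and cardinality claims from the same half-interval feasibility/infeasibility invariants, and gets the evaluation count by multiplying the half-interval iteration count by the per-call greedy cost $(N_d-M)NMS$. The only slip is minor: your numerator bound $g(\emptyset)-\xi\le\max_j J_j(\emptyset)$ (``uniformly in $\xi\ge 0$'') only yields $\alpha=1+\log(\max_j J_j(\emptyset))$; to obtain the stated $-1$ you need $\xi\ge 1$, which holds because strictly positive integer-valued costs force the optimum to be at least one---exactly the step the paper invokes.
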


\begin{proof}
Let $\Bar{J}(\A) = \frac{1}{M}\sum_j \Jhat$. Prorok's Proposition 3 \cite{prorok_robust_2020} shows that for our sub-problem, $\alpha = 1+\log \frac{\Bar{J}(\emptyset) - \xi}{ \Bar{J}\left(\mathcal{A}_{k-1}\right) - \xi}$, where $\mathcal{A}_{k-1}$ is the set chosen by the greedy algorithm in its penultimate step (i.e., the next assignment by the greedy algorithm achieves $\xi$). Because $\Bar{J}\left(\mathcal{A}_{k-1}\right)$ can be arbitrarily close to $\xi$, the bound can tend towards infinity, preventing a $\xi$-independent bound on $\alpha$. Assuming $\J$ only takes integer values, though, limits $\Bar{J}\left(\mathcal{A}_{k-1}\right) - \xi$ to minimum value of one, and ensures $\xi \geq 1$ because costs are strictly positive, thus yielding a $\xi$-independent bound of
\begin{equation} \label{alpha}
\alpha = 1 + \log (\max_j J_j(\emptyset) - 1) \geq 1 + \log (\Bar{J}(\emptyset) - \xi) 
\end{equation}

This result is similar to that derived by Wolsey \cite{wolsey_analysis_1982}. Additionally, Krause builds on Wolsey's result to show in Section~7.1 \cite{krause_robust_2008} that integer objective functions can be easily extended to take rational values by rounding by their highest order bits, allowing a small additive approximation error. Simulations in Section~\ref{exp} show near-optimal performance without the need for this rounding.

The computational complexity comes from the number of times the half-interval search runs $(1+ \log_2(M \max_j J_j(\emptyset))$ \cite{krause_robust_2008} and the $(N_d-M)NMS$ evaluations in the inner loop sub-problem \cite{prorok_robust_2020}.
\end{proof}

\begin{algorithm}
\caption{Fair Supermodular Redundant Assignment}
\begin{algorithmic}[1] \label{a1}
    \REQUIRE $J_1, ... , J_M, N_d, \alpha, \Ocurl$
    \ENSURE Set of fair redundant assignments, $\A_f$
    \STATE Define $\Bar{J}(\A,\xi) =  \frac{1}{M}\sum_j \max\{\J,\xi\}$
    \STATE $ \xi_{min} \xleftarrow{}  0 $
    \STATE $ \xi_{max} \xleftarrow{}  \max_j J_j(\emptyset) $ 
    \STATE $\A_f \xleftarrow{} \emptyset $ 
    \WHILE{$\xi_{max} - \xi_{min} \geq \frac{1}{M}$}
        \STATE $ \xi \xleftarrow{}  \frac{\xi_{min} + \xi_{max}}{2}$
        \STATE $\A_s \xleftarrow{} \mathrm{GRA}(\Bar{J}, \xi, \Ocurl)$
        \IF{$|\A_s| \leq \alpha(N_d - M) $}
            \STATE $\xi_{max} \xleftarrow{} \xi$
            \STATE $\A_f \xleftarrow{} \A_s $
        \ELSE
            \STATE $\xi_{min} \xleftarrow{} \xi$
        \ENDIF
    \ENDWHILE
    \RETURN $\A_f$
\end{algorithmic}
\end{algorithm}

\section{Evaluation}\label{exp}

We present Algorithm \ref{a1} as it applies the case study of robots with uncertain travel times assigned to spatially located tasks, such as in emergency supply delivery. The cost at each node is defined as the expected waiting time, $\J = \underset{c}{\mathbb{E}} \left[ \min_i \left\{ C_{i j } |(i, j) \in \A \cup \Ocurl  \right\} \right]$. The \textit{minimum} operator here represents the \textit{first-come first-to-serve} principle \cite{prorok_robust_2020}. In other words, all robots have the emergency supplies (e.g., water or medicine) to delivery to any task, and whichever robot arrives to a task location first provides the supply thus accomplishing the task. This case study seeks to minimize expected waiting time of the worst task while ensuring the total deployment size (initial plus redundant assignment) is less than $N_d$. Our case study includes three sets of simulations. 

The first two sets of simulations use random bipartite graphs to empirically showcase Theorem~\ref{T} and to demonstrate that Algorithm~\ref{a1} outperforms benchmark algorithms. The bipartite graphs in these simulations are abstract representations of any problem that uses the first-come first-to-serve principle. The last set of simulations uses a random transport network to analyze a use case of Algorithm \ref{a1} and highlight the unfairness that can arise from utilitarian assignment.

\subsection{Empirical Display of Theorem \ref{T}}

In this section, we display and visualize Theorem \ref{T} empirically by showcasing that Algorithm~\ref{a1} yields assignments with more-than-optimal cost when using the cardinality relaxation of $\alpha = 1 + \log (\max_j J_j(\emptyset) - 1)$ and near optimal cost when $\alpha = 1$. The former achieves super-optimality by using a deployment size that is larger than the desired deployment size, while the latter respects the desired deployment size but has no theoretical guarantees on performance (though empirically near-optimal in most cases).

\begin{figure}
    \centering
    \text{\footnotesize \quad \quad Cost of Algorithm 1 Compared to Optimal}\par\medskip
    \includegraphics[scale=0.375]{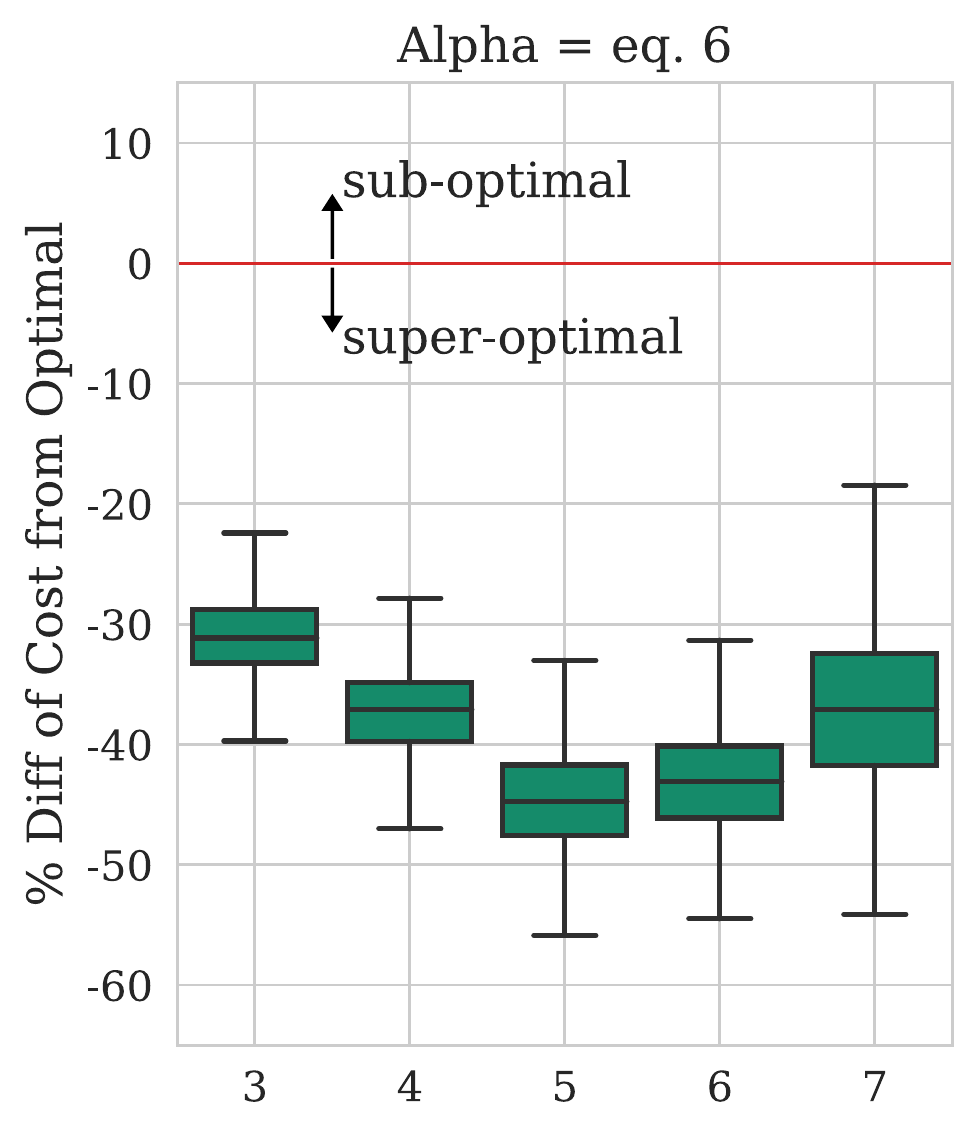}
    \includegraphics[scale=0.375]{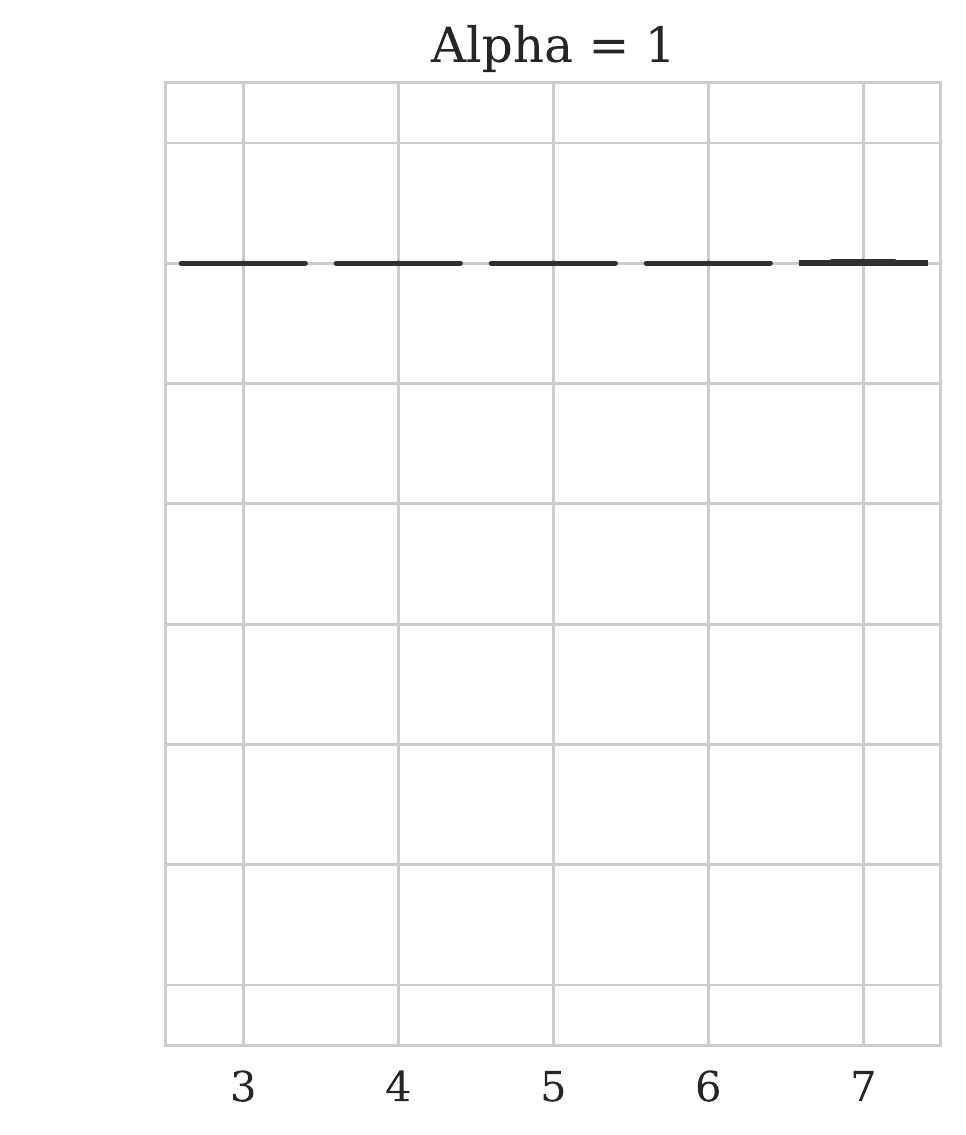}
    \includegraphics[scale=0.375]{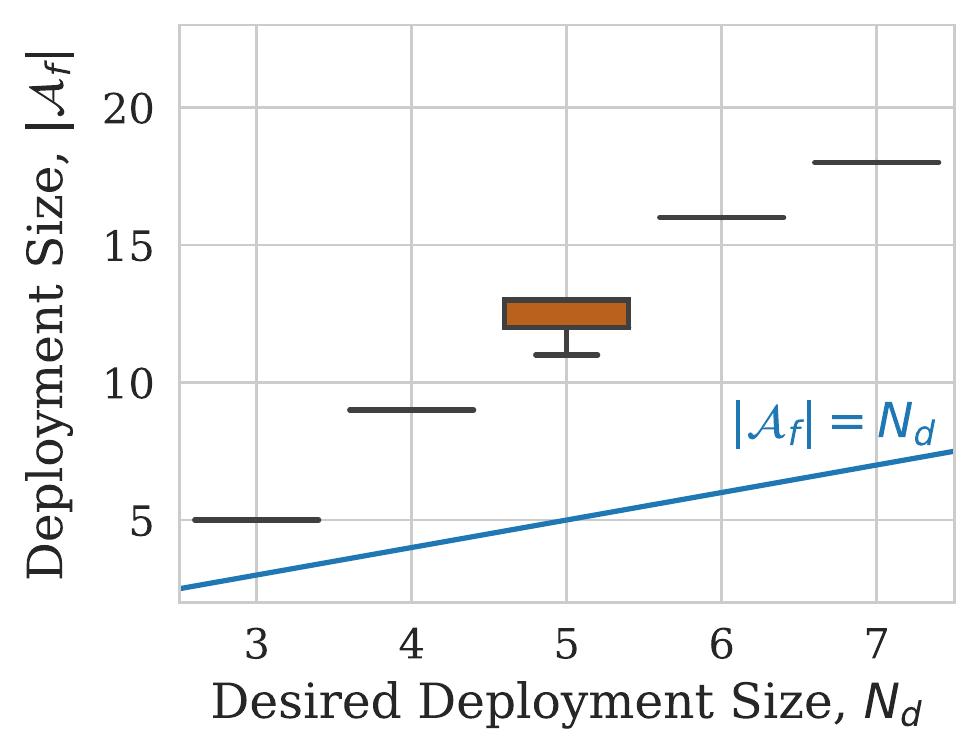}
    \includegraphics[scale=0.375]{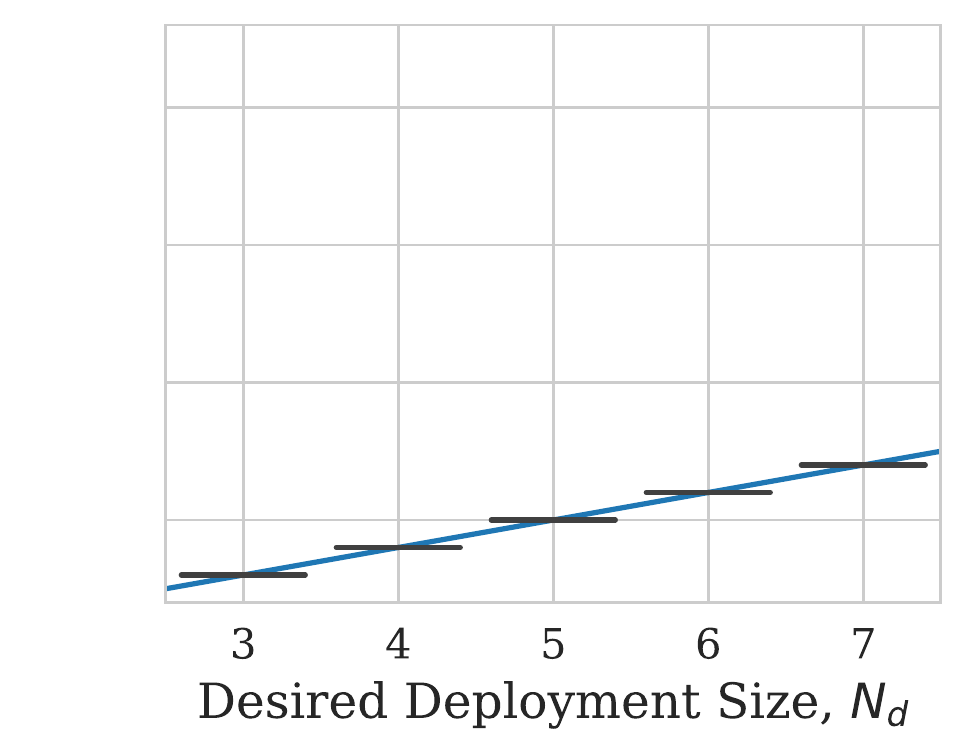}
    \caption{\textit{Top.} The percent difference in cost between the optimal assignment and the assignments produced by Algorithm~\ref{a1}  displays our above proof that the $\alpha$ value proven in Theorem~\ref{T} (\textit{left}) yields super-optimal cost by using larger than optimal assignments,and shows that $\alpha=1$ (\textit{right}) yields near optimal results. Note that \textbf{positive values are sub-optimal} (\textit{top}) because the problem is a minimization. Outliers are omitted. \textit{Bottom.} The comparison of the actual deployment size to the deployment size constraint shows that super-optimality is achieved by relaxing the constraint, yielding larger assignments and that $\alpha=1$ respects the deployment size constraint.
    }
    \label{fig:algo_v_optimal}
\end{figure}

To show these results, Algorithm~\ref{a1} must be compared to the optimal assignment. Finding the optimal is extremely computationally expensive because it requires brute force search over the power set. Therefore, this simulation set is constrained to small examples. We represent a problem with 18 agents, 2 tasks, and deployment sizes $[3, 4, 5, 6, 7]$ as a random bipartite graph, $\mathcal{B}=(\mathcal{U}, \mathcal{F}, \mathcal{C})$. The small number of tasks and deployment sizes are chosen for computational tractability of the optimal solution, and the number of agents is chosen to allow for the cardinality relaxation.

The set of vertices $\mathcal{U}$ is partitioned into two disjoint sets $\mathcal{U}_a$ and $\mathcal{U}_t$, the set of agent nodes (size 18) and task nodes (size 2), respectively. The graph is fully connected, i.e., every node in $\mathcal{U}_a$ is connected to all nodes in $\mathcal{U}_t$. Each edge has a corresponding cost random variable: a truncated Gaussian distribution.\footnote{Algorithm~\ref{a1} does not rely on any Gaussian assumption. Any arbitrary distribution, discrete or continuous, can be used.} These edges represent the distribution of travel times from the agent location to the task location (e.g., in minutes). Means are uniformly sampled from the range $[15,~20]$, standard deviations uniformly sampled from the range $[5,~10]$, and all distributions are truncated at 5 (restricting the minimum travel time between any agent and task to 5). 

Using these parameters, we create 1000 random bipartite graphs and implement Algorithm \ref{a1} using $\alpha = 1 + \log (\max_j J_j(\emptyset) - 1)$ and $\alpha = 1$, then calculate the brute force optimal solution. Figure \ref{fig:algo_v_optimal} shows the aggregate results over the 1000 trials. Because $\alpha = 1 + \log (\max_j J_j(\emptyset) - 1)$ depends on the problem instance, the value of $\alpha$ is not constant among trials and ranges from $[3.8, 4.2]$. The left plots show that the cardinality relaxation yields assignment sizes that are larger than the desired size but costs that are better than optimal. On the right, using $\alpha = 1$ respects the desired deployment size while achieving near-optimal performance empirically. \footnote{The majority of trials using $\alpha=1$ are optimal; only the unshown outliers are sub-optimal, which lie within 10\% of optimal in these simulations.}

\begin{figure}
    \centering
    \text{\footnotesize \quad \quad Benchmark Comparisons of Algorithm 1}\par\medskip
    \includegraphics[scale=0.4]{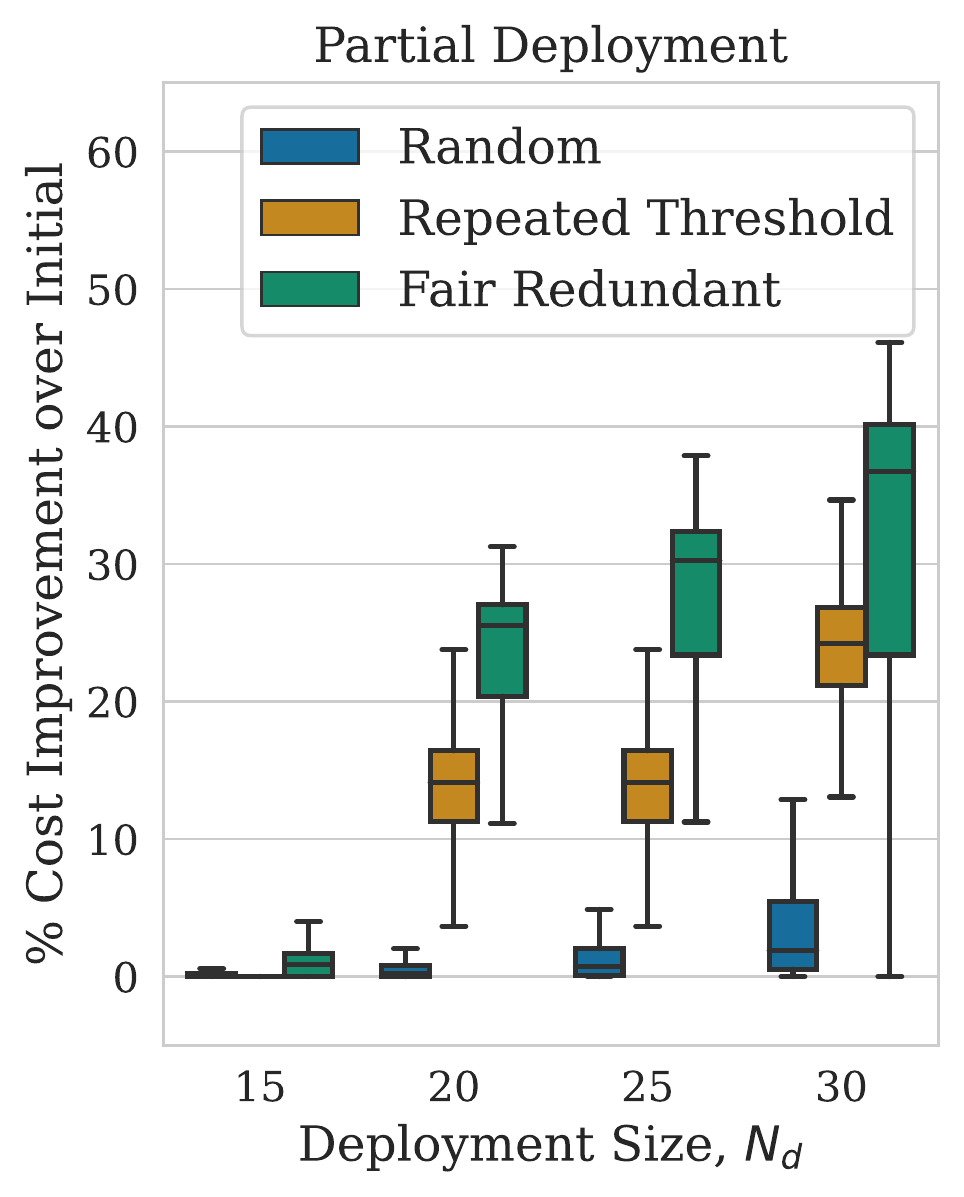}
    \includegraphics[scale=0.4]{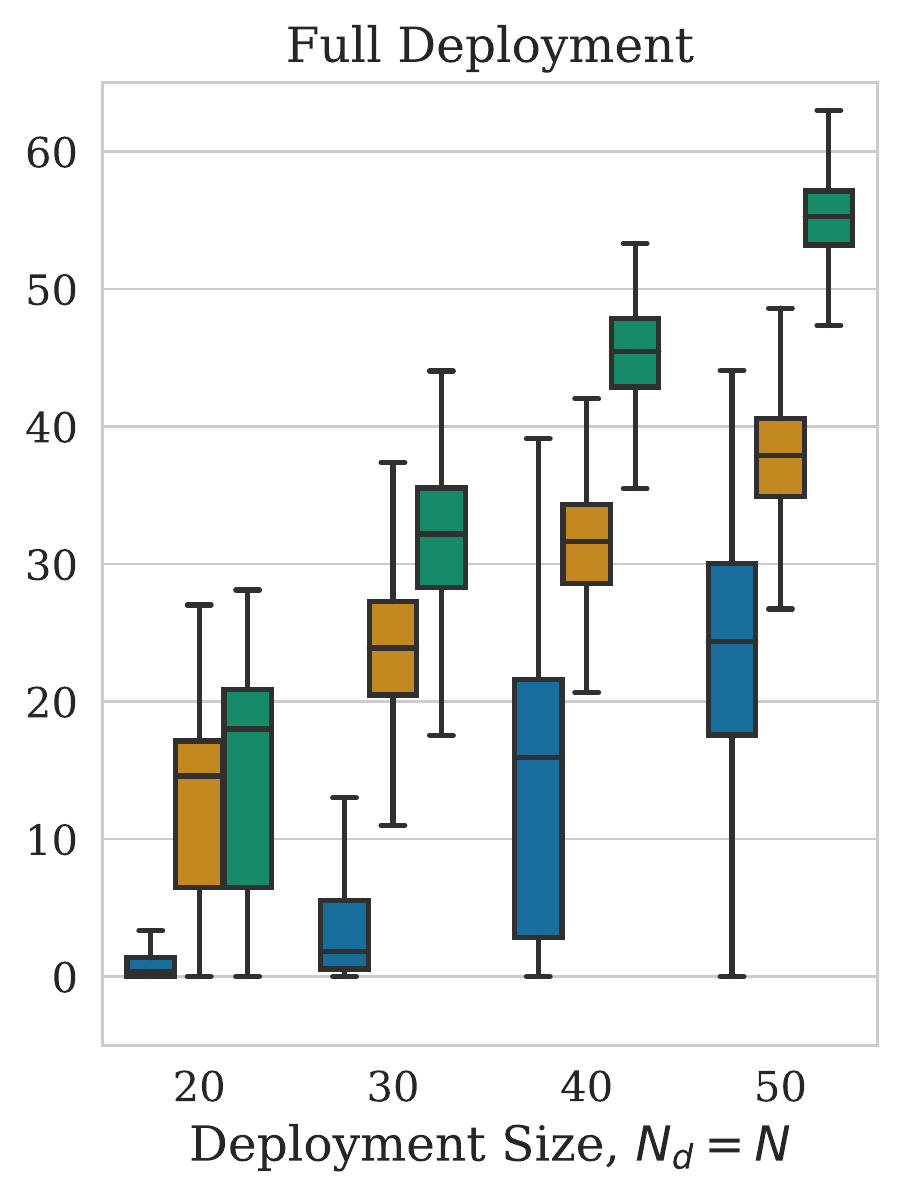}
    \caption{Fair redundant assignment yields the largest improvement in maximum task cost compared to benchmarks. The plots above shows the improvement in cost for random redundant assignment, repeated threshold assignment, and fair redundant assignment. The data show the percent difference between the respective assignment and the initial non-redundant assignment. \textit{Left.} $M=10$ tasks, $N=40$ agents, and varying $N_d$ deployment sizes (\textit{x-axis}). \textit{Right}. $M=10$ tasks with varying number of agents, where the deployment size equals the number of agents, $N=N_d$.
    }
    \label{fig:benchmarks}
\end{figure}

\subsection{Comparing Algorithm \ref{a1} to Benchmarks}

In the following, we show that Algorithm \ref{a1} with $\alpha = 1$ outperforms three benchmarks: \textbf{(1)} non-redundant assignment using the thresholding algorithm, \textbf{(2)} random assignment of redundant agents, \textbf{(3)} repeated iterations of the thresholding algorithm for redundant assignment. We use $\alpha = 1$ since it is shown above to be near optimal without the need for cardinality relaxations. The results in Figure \ref{fig:benchmarks} show that redundant agents improve the performance of non-redundant assignments and that Algorithm~\ref{a1} outperforms all three benchmarks.

We create 1000 random bipartite graphs with the cost random variable parameters listed above. Because calculating the optimal is not needed in these simulations, larger graphs are studied.\footnote{Graphs with tens of agents and tasks are shown in this paper, but the algorithm has been run on examples with hundreds of agents and tasks.} We first consider 40 agents and 10 tasks, with increasing deployment sizes. These trials restrict the deployment size such that some agents are unused. Next, we consider 10 tasks with varying number of agents where all available agents are deployed, i.e., $N = N_d$. These two different problem setups are chosen to show that Algorithm \ref{a1} performs well in scenarios when partial and full deployment is desired. In both cases, Algorithm \ref{a1} outperforms the benchmarks. \footnote{Not shown here is Algorithm \ref{a1} with the $\alpha$ from equation~\eqref{alpha}, which is guaranteed perform at least as good as $\alpha = 1$, but comes at the cost of larger assignments. In the right of Figure \ref{fig:benchmarks}, both $\alpha$ values perform equally because there are no available agents to be used by the cardinality relaxation.}

\subsection{Random Transport Network Case Study}

We instantiate 500 random transport networks with 32 agents and 16 tasks randomly located at different nodes where the travel time along each edge is represented by a Gaussian random variable truncated at zero with mean uniformly sampled from the range $[10,~20]$ and standard deviation uniformly sampled from the range $[5,~10]$. Assignments of size 20 are found using Algorithm~\ref{a1} and the utilitarian approach.\footnote{Both algorithms require an initial non-redundant assignment. To compare the results on the same footing, both algorithms are given the same initial assignment: Hungarian assignment on mean travel times.} 

Figure \ref{fig:task_improvement} shows that the utilitarian approach nearly uniformly helps tasks, while Algorithm \ref{a1} focuses the redundant agents on helping tasks most in need (those with the highest task cost). Additionally, for more than half the trials, the utilitarian approach does not help the worst-off task. All redundant assignment is guaranteed to improve the average task cost because the redundant agents improve the performance of their assigned tasks. While utilitarian redundant assignment better improves the average task cost, it does not necessarily improve the maximum cost. Fair redundant assignment is guaranteed to improve both the maximum and the average task cost.

\begin{figure}
    \centering
    \text{\footnotesize \quad \quad Improvements of Tasks Due to Redundancy}\par\medskip
    \includegraphics[scale=0.42]{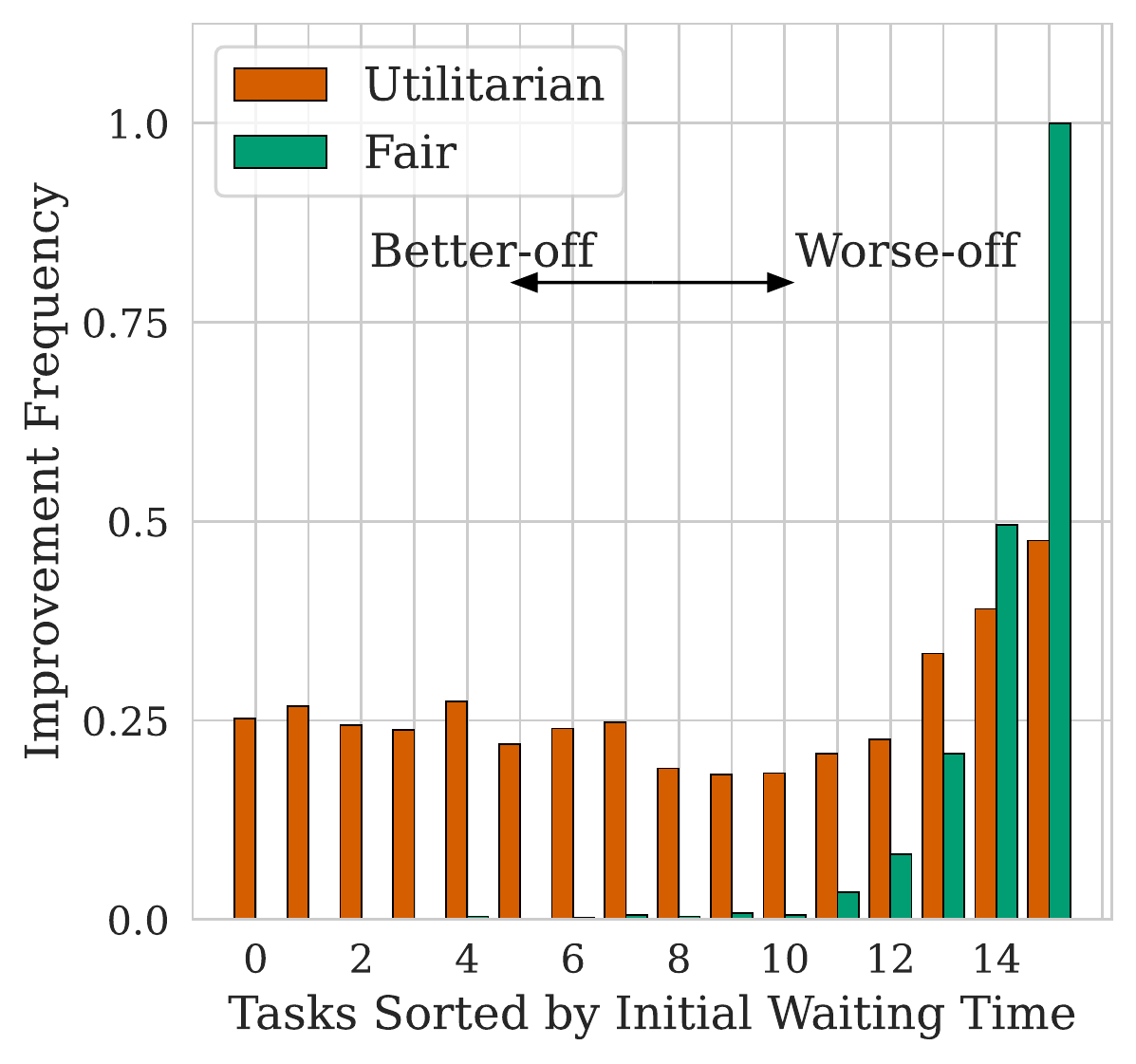}
    \includegraphics[scale=0.42]{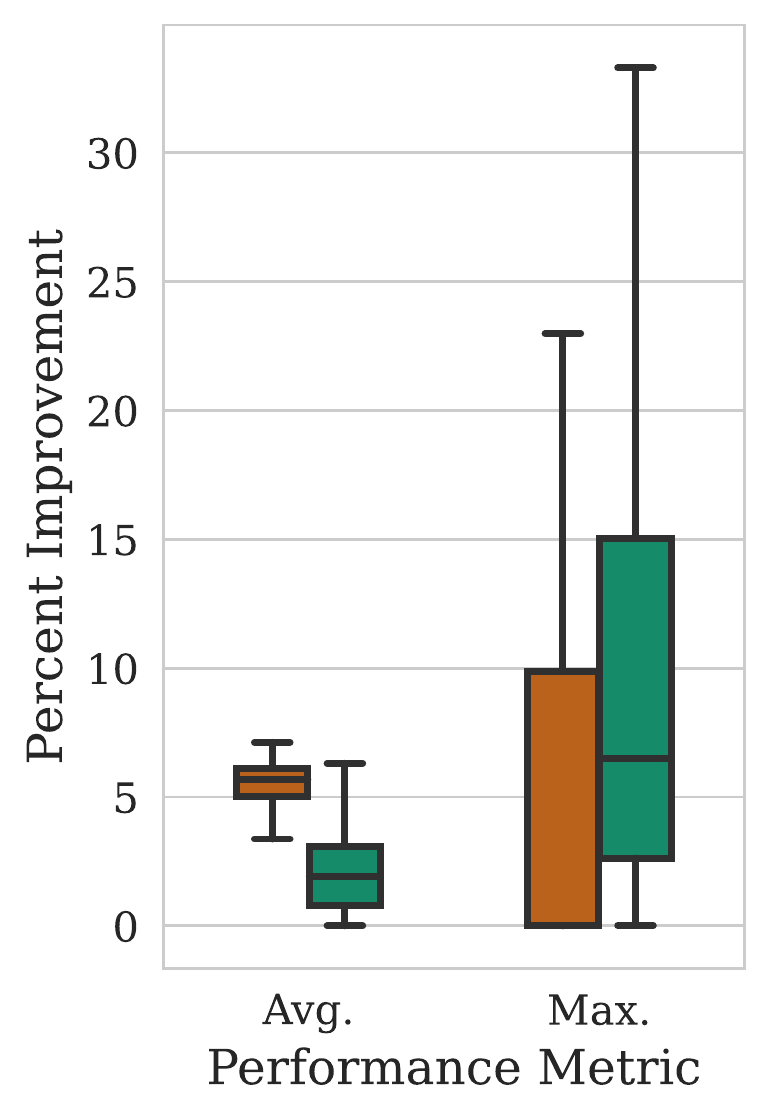}
    \caption{\textit{Left.} The frequency at which different tasks are improved by redundant assignment under the utilitarian and fair algorithms across 500 examples. The tasks are ordered by increasing expected cost after the initial assignment, showing that the fair approach always helps the tasks that are worse off while the greedy approach more uniformly helps tasks. \textit{Right.} The percent improvement in the average and maximum task cost compared to those of the initial assignment shows that the fair assignment does not improve the average as much as does the utilitarian assignment but strongly outperforms in improvement of the maximum cost.
    }
    \label{fig:task_improvement}
\end{figure}

\section{Discussion}

This work is inspired by robotic applications such as rescue and medical or emergency supplies delivery. The objectives of such applications focus on task completion time and omit other costs (e.g., energy used). While previous work focuses on average performance, we observe that the human-centric nature of these applications requires considerations of fairness. While our work contributes to fairness in redundant assignment by improving the objective function, we acknowledge that fairness is more than a mathematical formulation and we welcome discussions of the broader impact of these algorithms. 

For practitioners, note the similarity of our problem formulation and approach to works on providing robustness to a worst-case objective. This formulation is flexible in that it has few assumptions, such as the supermodularity of the cost functions. Thus far, work in redundant assignment has focused on the first-come first-to-serve principle. However, Algorithm~\ref{a1} can be applied to many multi-agent problems where uncertainty impacts task performance, requiring new collaboration functions, $\J$.

When studying other applications, scalability must be considered. Theorem~\ref{T} shows that the complexity of Algorithm~\ref{a1} is dominated by the greedy term, which scales linearly with the dominant term, $N$, because the number of tasks, $M$, and degree of redundancy, $N_d$ are small. In instances where both $N_d$ and $N$ are large, the scaling is approximately quadratic. In such cases, our centralized approach is less tractable, and therefore could be adapted via hierarchical approaches or by developing new decentralized algorithms.

Last, we note that existing approaches in redundant assignment, including this work, assume an initial non-redundant assignment. Therefore, no theoretical guarantees have been proven for the full assignment of initial and redundant agents thus far. Doing so will require new algorithms that do not rely on the initial assignment to maintain supermodularity and that ensure that all tasks are assigned at least one agent.

\section{Conclusions}

Redundant assignment provides robustness against uncertainty and improves task performance. While redundancy comes at the cost of using more robots, it is beneficial in time sensitive scenarios such as rescue or emergency delivery.

Inspired by the human-centric nature of these application spaces (e.g., rescue), we build on previous work in redundant assignment to consider fairness; notably, we are first to formalize fairness in this space. Using a Rawlsian approach to fairness, we formulate fair redundant task allocation as the optimization of worst-case task cost with a cardinality constraint, a problem that is NP-Hard. We exploit the natural supermodularity of the problem to propose a tractable solution. 

Algorithm \ref{a1} is a binary search where the target value is determined by greedily solving the relaxed sub-problem shown in equation 5. This algorithm converges to an approximate solution: an assignment that is $\alpha$ times larger than the cardinality constraint. We prove performance guarantees of this algorithm, showing that the returned approximate solution has a cost that is less than that of the optimal solution but a solution size that is $\alpha$ times larger. Theorem~\ref{T} proves bounds on this $\alpha$ value.

Additionally, we show empirically that Algorithm \ref{a1} with $\alpha=1$ (meaning the cardinality constraint remains satisfied) provides near optimal results, despite having no theoretical guarantees. Simulated experiments show that this approach outperforms benchmarks, scales to large problem instances, and provides both increased fairness and increased utilitarian social welfare over non-redundant assignments.

\appendix
\subsection{Supermodularity}\label{supermod}
Supermodularity implies that adding an element $x$ to a set $\A$ results in a larger or equal marginal decrease than when $x$ is added to a \textbf{superset} of $\A$. This is known as the property of diminishing returns; there are diminishing returns from an added element $x$ as the set it is added to grows larger. Below are the formal definitions of supermodularity \cite{prorok_robust_2020}.

\begin{definition} \label{marginal_decr}
Marginal Decrease. For a finite set $\mathcal{F}$ and a given set function $J: 2^{\mathcal{F}} \mapsto \mathbb{R},$ the marginal decrease of $J$ at a subset $\mathcal{A} \subseteq \mathcal{F}$ with respect to an element $x \in \mathcal{F} \backslash \mathcal{A}$ is:
\begin{equation}
\Delta_{J}(x | \mathcal{A}) \triangleq J(\mathcal{A})-J(\mathcal{A} \cup\{x\})
\end{equation}
\end{definition}

\begin{definition}\label{supermod_def}
Supermodularity. Let $J: 2^{\mathcal{F}} \mapsto \mathbb{R}$ and $\mathcal{A} \subseteq \mathcal{B} \subseteq$ $\mathcal{F} .$ The set function $J$ is supermodular\footnote{Note, a supermodular function $J$ implies $-J$ is submodular.} if and only if for any $x \in \mathcal{F} \backslash \mathcal{B}$
\begin{equation}
\Delta_{J}(x | \mathcal{A}) \geq \Delta_{J}(x | \mathcal{B})
\end{equation}
\end{definition}

\begin{lemma}\label{supermod_lemma}
The maximum of a set of supermodular functions is not supermodular.
\end{lemma}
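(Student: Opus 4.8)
The plan is to prove the lemma by a single counterexample: because the statement asserts only that the maximum of supermodular functions fails to be supermodular \emph{in general}, it suffices to display two supermodular functions whose pointwise maximum violates the defining inequality of Definition~\ref{supermod_def}. I would work on the smallest ground set on which supermodularity has content, namely $\mathcal{F} = \{x,y\}$, so that the only instances to check are $\Delta_J(x \mid \emptyset) \geq \Delta_J(x \mid \{y\})$ and its $x \leftrightarrow y$ symmetric partner.

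First I would record the reduction that on a two-element ground set the marginal-decrease condition collapses to a single four-point inequality. Writing a set function as the tuple of its values $(J(\emptyset),\, J(\{x\}),\, J(\{y\}),\, J(\{x,y\}))$, the condition $\Delta_J(x\mid\emptyset) \geq \Delta_J(x\mid\{y\})$ is exactly $J(\emptyset) + J(\{x,y\}) \geq J(\{x\}) + J(\{y\})$, and the symmetric inequality is identical after rearrangement. Thus supermodularity of each candidate function reduces to verifying this one inequality, and I need the maximum to reverse it.

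The key step is to choose the two functions so that each attains its large value at a \emph{different} singleton. Concretely, taking $J_1 = (2,2,1,1)$ and $J_2 = (2,1,2,1)$ in the tuple notation above, each $J_i$ satisfies the four-point inequality (with equality), and each is even monotone non-increasing, matching the cost-function setting used elsewhere in the paper. Their pointwise maximum is $J = \max\{J_1,J_2\} = (2,2,2,1)$, since the maximum selects the peak singleton value from $J_1$ at $\{x\}$ and from $J_2$ at $\{y\}$. I would then read off directly from Definition~\ref{supermod_def} that $\Delta_J(x\mid\emptyset) = 0 < 1 = \Delta_J(x\mid\{y\})$, so $J$ is not supermodular, completing the proof.

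The substantive point is not the arithmetic but the structural mechanism it exposes: the maximum can combine two singleton peaks that live in \emph{different} underlying functions, inflating the $J(\{x\}) + J(\{y\})$ side of the four-point inequality while $J(\emptyset)$ and $J(\{x,y\})$ stay pinned to the smaller values, and this switching of the dominant function between points is precisely what destroys diminishing returns. The only care needed is to keep the cost/marginal-decrease convention oriented correctly so the inequality is applied in the right direction.
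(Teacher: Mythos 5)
Your proof is correct, and it exploits exactly the same structural mechanism as the paper's: the index achieving the maximum switches between the subset and the superset, so the marginal decrease of $\max_j J_j$ is zero at the smaller set but strictly positive at the larger one. The execution, however, is genuinely different. The paper argues schematically inside the assignment setting: it posits a set $\mathcal{A}$ at which some $J_1$ is not the maximum (so adding an agent $x$ assigned to task~1 leaves the max unchanged), then invokes a superset $\mathcal{B} \supseteq \mathcal{A}$ that improves all other task costs until $J_1$ dominates, with $J_1(\mathcal{B}) = J_1(\mathcal{A})$ --- a construction whose feasibility is justified only by a footnote stating that the cost distributions are unconstrained. You replace this existence argument with a fully explicit, minimal counterexample: two modular (hence supermodular, with equality) functions on a two-element ground set, with every instance of Definition~\ref{supermod_def} reduced to the single four-point inequality $J(\emptyset) + J(\{x,y\}) \geq J(\{x\}) + J(\{y\})$, which each $J_i$ satisfies as $3 \geq 3$ while the pointwise maximum fails it as $3 < 4$. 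What your version buys is self-containedness and airtight verifiability, independent of the assignment semantics; it also matches the lemma as literally stated, which concerns arbitrary supermodular set functions rather than the paper's particular cost functions. What the paper's version buys is the demonstration that the failure arises for the expected-waiting-time costs actually used in the rest of the paper, i.e., that it is not an artifact of a contrived example. Both arguments are valid; as a proof of the lemma as stated, yours is the more rigorous of the two.
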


\begin{proof}
Consider an assignment set $\A$ and an element $x$ to be added. Let $x$ decrease the value of $J_1(\A)$. Here, $x$ is an assignment of a new agent to task 1, thus decreasing the expected cost of task 1. If $J_1$ is not the maximum of the all $J_j$, the addition of $x$ to $\A$ has no effect on $\max_j(\J)$. In other words, its marginal decrease of $\max_j(\J)$ with respect to $x$ is zero. By Definition \ref{supermod_def}, $\max_j(J_j)$ is only supermodular if the marginal decrease of $\max_j(J_j)$ with respect to $x$ is also zero for \textbf{all supersets} of $\A$.

Let $\mathcal{B}$ be a superset of $\mathcal{A}$ that improves upon all task costs except task 1, such that the maximum cost of $\mathcal{B}$ is $J_1$. Here $\mathcal{B} \supseteq \A$ such that $\max_j(J_j(\mathcal{B})) = J_1(\mathcal{B}) = J_1(\A)$.\footnote{This is feasible because the cost distributions are unconstrained.} 
Since the assignment $x$ decreases the value of $J_1(\A)$, it decreases the value of $\max_j(J_j(\mathcal{B}))$. Therefore, the marginal decrease of adding $x$ to $\mathcal{B}$ is nonzero, meaning $\max_j(J_j)$ is not supermodular.
\end{proof}
\footnotesize\printbibliography
\end{document}